\newcommand{\dtv}{\bar{\delta}_{tv}}
\newtheorem{theorem}{Theorem}
\newtheorem{lemma}{Lemma}
\newtheorem{defn}{Definition}
\newmdtheoremenv{intuit}{Insight}
\newcommand{\mc}[1]{\mathcal{#1}}
\DeclarePairedDelimiterX{\inp}[2]{\langle}{\rangle}{#1, #2}
\begin{document}

\title{On the Maximum Mutual Information Capacity of Neural Architectures}

\author{Brandon~Foggo and~Nanpeng~Yu,
}
    
\tikzset{
		block/.style = {draw, fill=white, rectangle, minimum height=1em, minimum width=3em},
		tmp/.style  = {coordinate}, 
		sum/.style= {draw, fill=white, circle, node distance=1cm},
		input/.style = {coordinate},
		output/.style= {coordinate},
		pinstyle/.style = {pin edge={to-,thin,black}
		}
	}
     
\RestyleAlgo{boxruled}

\maketitle
    
\begin{abstract}
We derive the closed-form expression of the maximum mutual information - the maximum value of $I(X;Z)$ obtainable via training - for a broad family of neural network architectures. The quantity is essential to several branches of machine learning theory and practice. Quantitatively, we show that the maximum mutual information for these families all stem from generalizations of a single catch-all formula. Qualitatively, we show that the maximum mutual information of an architecture is most strongly influenced by the width of the smallest layer of the network - the ``information bottleneck'' in a different sense of the phrase, and by any statistical invariances captured by the architecture.
\end{abstract}


\section{Introduction}
Information Theory acts as a powerful complement to traditional statistical machine learning theory. It provides bounds that are agnostic to parameter size, yields interpret-able results, and is physically related to several fields of application. However, the rigorous study of information-theoretic properties relating to the architecture of neural networks remains somewhat immature.  In this paper, we will study, for a large family of architectures, the maximum mutual information (MMI) that the network's hidden representation $Z$ can hold from a feature space $\mc{X}$ with distribution $\mathbb{P}_{X}$.  

That is, we are studying the supremum $\sup_{\theta} I(X;Z_{\theta})$ where $\theta$ is the parameter space of our architecture, $Z$ is the random variable denoting the output of our final hidden layer (our representation), and $I(X;Z)$ is the mutual information between the representation and the input. The reason for studying this quantity is due to the prevalence of the following concept in Information-Theoretic Machine Learning Theory (ITMLT): $I(X;Z)$ acts as a data-sensitive measure of complexity for the representation. Indeed, we can often obtain bounds on machine learning performance in a way that is highly dependent on $I(X;Z)$. In this sense, $\sup_{\theta} I(X;Z_{\theta})$ is a data-sensitive measure of complexity for the network itself. Thus its study can give us insight into why certain architectures perform well on certain datasets and may be useful for finding architectures that will work well with new datasets.

In a much more practical sense, Information Theoretic Machine Learning Theory establishes the existence of a ``best'' value of $I(X;Z)$, which maximizes the potential of obtaining a good representation. While this target always exists, it typically will not be known entirely. A sub-branch of this field argues that neural networks will attempt to find this target naturally through training \cite{shwartzziv2017opening, tishby2015deep, achille2018emergence}, but there is no guarantee that the training process will find it. It is, however, often possible to approximate this target with an upper bound. We can then choose our architecture to enforce this upper bound naturally via its maximum mutual information - significantly reducing the search space for the target $I(X;Z)$ in the training process.

While the capacity of neural networks has been analytically studied in a different sense [i.e. via storage of 'patterns'] \cite{mackay2003information} (chapter 5), \cite{prados1989neural}, and numerical methods of estimation of mutual information in deep networks exist \cite{gabrie2018entropy, paninski2003estimation, bahl1986maximum, pmlr-v80-belghazi18a, hjelm2018learning, gao2017estimating}, no analytical studies of this type exist in literature. 


\section{Notation, Background, and Motivation}\label{sec:background}
Our primary motivation for studying maximum mutual information comes from some modern theoretical work in ITMLT, which studies the classification of a discrete variable ${Y \in \mc{Y}}$ from an input $X \in \mc{X}$, jointly distributed as $p(x,y)$. The theory studies the potential losses in the quality of a learning machine's representation of its input, $Z$, when trained from just a sample of data $S = \left((X_1, Y_1), \cdots, (X_m, Y_m)\right)$. 

A natural quality measure for a representational variable is its mutual information with the classification variable, $I(Y;Z)$ \cite{cover2012elements}. This is a measure of how well $Z$ predicts the class variable $Y$, and is a component in the objective functions of several deep learning methods \cite{chen2016infogan, foggo2019improving, alemi2016deep, kolchinsky2019nonlinear, bang2019explaining}. We can then conceive of a ``best possible'' representation being that which maximizes $I(Y;Z)$ when the natural distribution $p(x,y)$ is known perfectly. This ``best possible'' representation is denoted $Z^*$. Of course, we typically don't know $p(x,y)$ perfectly, but we can still obtain a representation $\hat{Z}(S)$ from our samples. If we constrain both variables, $Z^*$ and $\hat{Z}$, to have a fixed mutual information with $X$ (i.e. $I(X;Z^*) = I(X;\hat{Z}) \triangleq I(X;Z)$), then we then have the following bound on the quality loss of $\hat{Z}(S)$ against $Z^*$ \cite{Foggo_2019}: 
\begin{equation}\label{eqn:prod_form}
    \underbrace{\left|I(Y;\hat{Z}(S)) - I(Y;Z^*) \right|}_{\text{quality loss}} \leq 2\underbrace{\dtv}_{\text{data}}(S)\underbrace{I(X;Z)}_{\text{architecture}} + 2h_2(\dtv(S))
\end{equation}

Note that the constraint $I(X;Z^*) = I(X;\hat{Z}) \triangleq I(X;Z)$ is a traditional choice first made in the early literature on the Information Bottleneck method \cite{tishby2000information, slonim2000agglomerative, friedman2013multivariate} - a method which helped spark the existence of ITMLT. The constraint just exists to normalize the two representations to a fixed level of "forgetfullness" from the input. The under-braces on the right hand side of this inequality - implying that $\dtv$ is data dependent and $I(X;Z)$ is architecture dependent in a decoupled way - is justified by the work introducing that bound \cite{Foggo_2019}. In that work, $\dtv$ was upper bounded in a way that did not depend on the architecture's complexity by terms on the order of $O\left(\sqrt{\frac{|\mc{Y}|m'log(2)}{2m}} \right)$ where $m'$ is a small integer, and $m$ is the size of the training sample. Further research has verified similar data dependent bounds \cite{br2019analyzing}. Since these bound did not depend on architectural complexity, all of the dependence on architecture must be contained in the other term: $I(X;Z)$. 

On the other side of the coin, we have strong data processing inequalities \cite{polyanskiy2017strong, polyanskiy2015dissipation} such as $I(Y; Z^*) \leq \eta_{X,Y} I(X;Z)$ where $\eta_{X,Y} \leq 1$ is the maximum correlation coefficient of $X$ and $Y$, i.e. $\eta_{X,Y} = \underset{f,g}{\sup~}\mathbb{E}\left[f(X)g(Y)\right]$, the supremum being taken over all functions $f: \mc{X} \to \mathbb{R}$ and all functions $g: \mc{Y} \to \mathbb{R}$ with $\mathbb{E}[f] = \mathbb{E}[g] = 0$, and ${\|f\|_{2} = \|g\|_{2} = 1}$. 

A qualitative combination of these two contrasting inequalities is summarized as follows: $I(Y; Z^*)$ increases quickly with $I(X;Z)$ up to its maximum of $I(Y;X)$ [Strong Data Processing Inequality], and $I(Y;\hat{Z})$ follows this trend due to the low information losses at low $I(X;Z)$. But eventually the risk of information loss by inequality (\ref{eqn:prod_form}) forces our best estimate of $I(Y; \hat{Z})$ to decrease linearly from this maximum value. The location of this behavioral change is problem-specific. But since $\eta_{X,Y}$ can be estimated via a small sample of data, these equations can be used to approximate a target $I(X;Z)$ - yielding an interval in which it is likely to be contained. Setting our architecture to have a maximum mutual information near the supremum of this estimated interval is desirable.


\section{The MMI Bottleneck and Parallel Structures}
We begin with a key takeaway in terms of series and parallel components. The MMI over channels in series is the smallest MMI of the series, and the MMI over channels in parallel is the sum of the parallel MMI values. One consequence of this takeaway is that the information theoretic properties of fully connected architectures are strongly dominated by the dimension of the smallest layer.

We will often be able to identify a dominant structural parameter that limits the MMI of a series calculation. When we identify the parameter, we will call it the \emph{MMI bottleneck}.  In a fully connected network, this will be given by the dimension of the smallest hidden layer. Special focus should be given to the MMI bottleneck when attempting to control $I(X;Z)$.


\section{Single Layer Linear Networks}
We will first study networks consisting of a single layer, and with no activation function. While this is a highly specialized case, the results and methods of obtaining those results generalize quite well to other cases. This section will consider both fully connected architectures and convolutional architectures.

\subsection{Fully Connected Case}
We begin by deriving the MMI of a linear network with a standardized Gaussian input. This is a highly specialized case, but we will see that it generalizes quite nicely to a large family of architectures including convolutional architectures and architectures with relu activation functions.  

We consider the constrained problem in which the weight matrices $W$ are constrained by Frobenius norm. We will see that the Maximum Mutual Information of this family of architectures is discontinuous in the Frobenius norm constraint with at most ${dim(\mc{X})}$ points of discontinuity. We will thus first specialize to the case where our Frobenius norm constraint is larger than the largest discontinuity point before moving to the more general scenarios. 


\begin{theorem}\label{thm:lingaussmmi}
Let $\Sigma_x$ be a positive definite matrix and let $\sigma^2 > 0$. Let $N_0$ and $N_1$ be natural numbers representing the input and hidden dimensions. Let $\mc{N}(\mu; A)$ denote the Gaussian distribution with mean $\mu$ and covariance matrix $A$. Let $ X \sim \mc{N}(0; \Sigma_x)$, $X \in \mathbb{R}^{N_0}$, $Z|X,W,b \sim \mc{N}(WX+b; ~\sigma^2 Id_{N_1})$, $Z \in \mathbb{R}^{N_1}$, where $W \in \mathbb{R}^{N_1 \times N_0}$, $Id_{N_1}$ is the identity matrix in $N_1$ dimensions, and $b \in \mathbb{R}^{N_1}$ is the bias vector. Let ${\tilde{N}=\min(N_0, N_1)}$. Let $\Sigma_{x,\tilde{N}}$ denote $\tilde{N} \times \tilde{N}$ diagonal matrix containing the $\tilde{N}$ largest eigenvalues of $\Sigma_x$. Let $\lambda^x_{\tilde{N}}$ denote the smallest eigenvalue of $\Sigma_{x,\Tilde{N}}$, and let $\rho_{\tilde{N}} \triangleq \sigma^2 \left(\frac{\tilde{N}}{\lambda^x_{\tilde{N}}} - Tr(\Sigma_{x,\tilde{N}}^{-1})\right)$. Let $F \geq \rho_{\tilde{N}}$, $I_{W,b}(X;Z) \triangleq \mathbb{E}_{\sim p(x,z|W, b)}\left[log \frac{p(z|x,W, b)}{p(z|W, b)} \right]$, and define $\text{MMI}(X;Z) \triangleq \underset{Tr(W^TW) \leq F}{\text{ sup }} I_{W,b}(X;Z)$. Then: 
\begin{equation}
    \text{MMI}(X;Z) = \frac{\tilde{N}}{2}log \left(\frac{F + \sigma^2Tr(\Sigma_{x,\tilde{N}}^{-1})}{\sigma^2 \tilde{N}} \right) + \frac{1}{2}log~|\Sigma_{x,\tilde{N}}|
\end{equation}
\end{theorem}

\begin{proof}~

Since $X$ is Gaussian and the network is linear, $Z$ is Gaussian for all $W, b$. Thus, we can express $I_{W,b}(X;Z)$ as $\frac{1}{2}log~\frac{|\Sigma_z|}{|\Sigma_{z|x}|} = \frac{1}{2}log~\frac{|\sigma^2 Id_{N_1} + W\Sigma_x W^T|}{|\sigma^2 Id_{N_1}|}
    = \frac{1}{2}log~|Id_{N_1} + \frac{1}{\sigma^2}W\Sigma_x W^T|$. Now, by the matrix determinant lemma, we have that $|Id_{N_1} + \frac{1}{\sigma^2}W\Sigma_x W^T| = {|\sigma^2 \Sigma_x^{-1} + W^TW| \cdot |\frac{1}{\sigma^2}\Sigma_x|}$, and so we can condense the dependence of the MMI optimization problem on $W$ to obtain:
\begin{align}\label{eqn:opt_fullform}
    &\text{MMI}(X;Z) = \frac{1}{2}log~|\frac{1}{\sigma^2}\Sigma_x|  + \underset{Tr(W^TW) \leq F}{\sup~} \frac{1}{2} log~|Q(W)|
\end{align}
where ${Q(W) = \sigma^2 \Sigma_x^{-1} + W^TW}$. Due to the positive definiteness of $Q$ and Hadamard's inequality, we can cast this constrained maximization problem into the realm of eigenvalues since the optimal $Q$ matrix will be diagonal, so $W^TW$ will have the same eigenvectors as $\Sigma_x^{-1}$:
\begin{align}\label{prob:MMIgl_eigform}
    \underset{\tilde{\lambda}_1, \cdots, \tilde{\lambda}_{N_0}}{\text{ sup }}\sum_{i=1}^{N_0} log \left(\tilde{\lambda}_i + \frac{\sigma^2}{\lambda^x_{i}}\right), ~
    \text{ s.t. }  \sum_{i=1}^{N_0}\tilde{\lambda}_i \leq F,~ \tilde{\lambda}_i \geq 0, ~i=1,2,\cdots,N_0 \notag \\
    \text{ and } \tilde{\lambda}_i = 0, \text{ for at least } \max(0,N_0-N_1) \text{ values of } i
\end{align}
where ${\lambda^x_i}$ is the $i^{th}$ largest eigenvalue of $\Sigma_x$, and ${\tilde{\lambda}_i}$ is the $i^{th}$ un-ordered eigenvalue of $W^TW$. The final constraint comes from the fact that $W^TW$ is only rank $\min(N_0, N_1)$. Furthermore, it must be the case that the mandatory $0$-valued eigenvalues of $W^TW$, when they exist (${N_0 > N_1}$), must be placed on the indices $i=N_1+1, \cdots, N_0$, as these correspond to the largest values of $\frac{\sigma^2}{\lambda_i^x}$. Indeed, suppose that we have placed a nonzero eigenvalue on one of these indices (without loss of generality, say index $p$, and that we set this eigenvalue to $l$) in such a way that all of the constraints are met. Then we must have placed a zero-valued eigenvalue on another index (which we will denote as $q$ without loss of generality, $q<p$). Then the objective function can be increased without violating any constraints by taking $l$ units of eigenvalue off of index $p$ and placing it on index $q$, and so this cannot be a solution to our optimization problem. To see this, observe that:
\begin{align}
    log\left(l+\frac{\sigma^2}{\lambda^x_p}\right) + log \left(\frac{\sigma^2}{\lambda^x_q}\right) = log\left(\frac{l\sigma^2}{\lambda^x_q} + \frac{\sigma^4}{\lambda^x_p\lambda^x_q}\right)
    &\leq log\left(\frac{l\sigma^2}{\lambda^x_p} + \frac{\sigma^4}{\lambda^x_p\lambda^x_q}\right) \notag \\
    &= log\left(\frac{\sigma^2}{\lambda^x_p}\right) + log \left(l + \frac{\sigma^2}{\lambda^x_q}\right)
\end{align}
 We are thus left with the following optimization problem: 
\begin{gather}\label{prob:MMIgl_eigform_simp}
    \underset{\tilde{\lambda}_1, \cdots, \tilde{\lambda}_{\tilde{N}}}{\text{ sup }}\sum_{i=1}^{\tilde{N}} log \left(\tilde{\lambda}_i + \frac{\sigma^2}{\lambda^x_{i}}\right), ~ \text{ s.t. }  \sum_{i=1}^{\tilde{N}}\tilde{\lambda}_i \leq F, ~\tilde{\lambda}_i \geq 0, ~i=1,2,\cdots,\tilde{N}
\end{gather}
This is a classic `water-filling' problem with heights given by scaled versions of the inverses of the first $\tilde{N}$ eigenvalues of $\Sigma_x$. Thus, for a given `water level', $\mu^*(F)$, a solution is readily available, being given by $\tilde{\lambda}_i = \text{max}\left(0, \mu^*(F) - \frac{\sigma^2}{\lambda^x_{i}}\right)$ [optimality]. However, finding the relationship between $\mu^*(F)$ and $F$ requires additional work, as $\mu^*(F)$ must satisfy $\sum_i \text{max}\left(0, \mu^*(F) -  \frac{\sigma^2}{\lambda^x_{i}}\right) = F$ [consistency]. We will show that our assumption, ${F \geq \rho_{\tilde{N}}}$, yields a consistent solution in which all maximums of the optimality equation are obtained in the second argument. To see this, note that under such a solution, the consistency equation yields $\mu^*(F) = \frac{F + \sigma^2Tr(\Sigma_{x,\tilde{N}}^{-1})}{\tilde{N}}$, which coincides with the optimality equation because, for each $i$, $\frac{F + \sigma^2Tr(\Sigma_{x,\tilde{N}}^{-1})}{\tilde{N}} - \frac{\sigma^2}{\lambda^x_{i}} \geq \frac{1}{\tilde{N}}\left(F - \rho_{\tilde{N}} \right) \geq 0$. Thus this solution holds and, in all, we have an MMI of:
\begin{align}
& \frac{1}{2}log|\frac{\Sigma_x}{\sigma^2}| + \frac{\tilde{N}}{2}log \left(\frac{F + \sigma^2Tr(\Sigma_{x,\tilde{N}}^{-1})}{\tilde{N}} \right) + \frac{1}{2}\sum_{i=\tilde{N}+1}^{N_0}log(\frac{\sigma^2}{\lambda^x_i})   \notag \\
= & \frac{1}{2}\sum_{i=1}^{\tilde{N}}log(\frac{\lambda^x_i}{\sigma^2}) + \frac{\tilde{N}}{2}log \left(\frac{F + \sigma^2Tr(\Sigma_{x,\tilde{N}}^{-1})}{\tilde{N}} \right)
= \frac{1}{2}log|\Sigma_{x,\tilde{N}}| + \frac{\tilde{N}}{2}log \left(\frac{F + \sigma^2Tr(\Sigma_{x,\tilde{N}}^{-1})}{\sigma^2\tilde{N}} \right)
\end{align}
completing the proof.
\end{proof}

Some takeaways from Theorem \ref{thm:lingaussmmi} are now in order. First, there is an MMI bottleneck given by $\min \{N_0, N_1 \}$ - the minimum of the input dimension and the hidden dimension. Secondly, if ${N_1 < N_0}$, then $N_1$ controls the number of principal components used to maximize the mutual information. Thirdly, the largest discontinuity point of $\text{MMI}(F)$ is dominated by the difference between the \emph{largest} reciprocal eigenvalue and the \emph{average} reciprocal eigenvalue of those principal components that are used. We also see that smaller principal components are removed first. Furthermore, if $\tilde{N}$ is large, then we have the following approximation $\text{MMI}(X;Z) \approx \frac{1}{2}\sum_{i=1}^{\tilde{N}}log(\lambda^{x}_{i} \cdot \bar{\lambda}^{-1})$ where $\bar{\lambda}^{-1}$ is the average reciprocal eigenvalue of those components that are used. 

We now move on to defining the rest of the discontinuity points. They are defined in the following Lemma.


\begin{lemma}\label{lemma:ordered_rho}
Take all of the assumptions from Theorem \ref{thm:lingaussmmi} except for the assumption that ${F \geq \rho}$. Let the $k^{th}$ largest eigenvalue of $\Sigma_{x}$ be denoted by $\lambda^x_k$. Let $K$ be a natural number, ${K < \tilde{N}}$, and let $\Sigma_{x,\tilde{N}-K}$ denote the ${(\tilde{N}-K) \times (\tilde{N}-K)}$ diagonal matrix containing the $\tilde{N}-K$ largest eigenvalues of $\Sigma_x$. Now, let $\rho_{\tilde{N}-K} \triangleq \sigma^2 \left(\frac{\tilde{N}-K}{\lambda^x_{\tilde{N}-K}} - Tr\left(\Sigma_{x,\tilde{N}-K}^{-1}\right)\right)$. Then:
\begin{equation}
    0 = \rho_{1} \leq \cdots \leq \rho_{\tilde{N}-K} \leq \rho_{\tilde{N}-K+1} \leq \cdots \leq \rho_{\tilde{N} - 1} \leq \rho_{\tilde{N}}
\end{equation}
\end{lemma}

Proofs of lemmas can be found in the supplementary material accompanying this paper.

Note that each discontinuity point is calculated in the same way as the largest one, but with successive removals of the smallest principal components from our dataset. With all of the discontinuities defined, we can calculate the maximum mutual information for the case when our Frobenius norm constraint is contained in any of the corresponding intervals of continuity. 


\begin{theorem}\label{thm:lingaussmmi_small}
Take all of the assumptions from Theorem \ref{thm:lingaussmmi} except for the assumption that $F \geq \rho_{\tilde{N}}$ and take all definitions from lemma \ref{lemma:ordered_rho}. Let $\rho_{\tilde{N}-K+1} \geq F \geq \rho_{\tilde{N}-K}$. Then $\text{MMI}(X;Z)$ is given by:
\begin{align}\label{eqn:lingaussmmi_small}
    \frac{\tilde{N}-K}{2}log \left(\frac{F + \sigma^2Tr(\Sigma_{x,\tilde{N}-K}^{-1})}{\sigma^2 (\tilde{N}-K)} \right) + \frac{1}{2}log~|\Sigma_{x,\tilde{N}-K}|
\end{align}
\end{theorem}

\begin{proof}
We can follow the proof of Theorem \ref{thm:lingaussmmi} up until the optimization problem given by (\ref{prob:MMIgl_eigform_simp}), whose solution will now be different because $F$ has changed. Again, we need to find a solution consistent with $\tilde{\lambda}_i = \text{max}\left(0, \mu^*(F) - \frac{\sigma^2}{\lambda^x_{i}}\right)$ [optimality] and $\sum_i \text{max}\left(0, \mu^*(F) -  \frac{\sigma^2}{\lambda^x_{i}}\right) = F$ [consistency]. We claim that our assumption, ${\rho_{\tilde{N}-K+1} > F \geq \rho_{\tilde{N}-K}}$ yields a consistent solution in which the $\tilde{\lambda}_i$ are zero for ${i > \tilde{N} - K}$ and nonzero otherwise. Under such a solution, the consistency equation yields $\mu^*(F) = \frac{F + \sigma^2Tr(\Sigma_{x,\tilde{N}-K}^{-1})}{\tilde{N} - K}$. We will show that this coincides with the optimality equation. First, define $l(i) \triangleq \frac{F + \sigma^2Tr(\Sigma_{x,\tilde{N}-K}^{-1})}{\tilde{N} - K} - \frac{\sigma^2}{\lambda^x_{i}}$. Then, if ${i \leq \tilde{N}-K}$,  we have:
\begin{align}
    l(i) \geq \frac{F + \sigma^2Tr(\Sigma_{x,\tilde{N}-K}^{-1})}{\tilde{N} - K} - \frac{\sigma^2}{\lambda^x_{\tilde{N} - K}}
    = \frac{1}{\tilde{N} - K}\left(F - \rho_{\tilde{N} - K} \right) \geq 0
\end{align}
On the other hand, if ${i > \tilde{N}-K}$, then:
\begin{align}
    l(i) \leq \frac{F + \sigma^2Tr(\Sigma_{x,\tilde{N}-K}^{-1})}{\tilde{N} - K} - \frac{\sigma^2}{\lambda^x_{\tilde{N}-K+1}}
    &= \frac{F + \sigma^2Tr(\Sigma_{x,\tilde{N}-K+1}^{-1}) - \frac{\sigma^2}{\lambda^x_{\tilde{N}-K+1}}}{\tilde{N} - K} - \frac{\sigma^2}{\lambda^x_{\tilde{N}-K+1}} \notag \\
    &= \frac{1}{\tilde{N}-K}\left(F - \rho_{\tilde{N}-K+1} \right) < 0
\end{align}
And so optimality is achieved. Under this solution, the objective function value is given by:
\begin{equation}
    \sum_{i=\tilde{N}-K+1}^{\tilde{N}} log\left(\frac{\sigma^2}{\lambda^x_i}\right) + (\tilde{N} - K) log \left(\frac{F + \sigma^2Tr(\Sigma_{x,\tilde{N}-K}^{-1})}{\tilde{N} - K}\right)
\end{equation}
Thus, in all, we have that MMI$(X;Z)$ is given by:
\begin{align}
    \frac{\tilde{N}-K}{2}log \left(\frac{F + \sigma^2Tr(\Sigma_{x,\tilde{N}-K}^{-1})}{\sigma^2 (\tilde{N}-K)} \right) +  \frac{1}{2}log\left(\frac{|\Sigma_{x,\tilde{N}}|}{\prod_{i=\tilde{N}-K+1}^{\tilde{N}} \lambda^x_i}\right)
\end{align}
Where the factors of $\frac{1}{2}$ and the term $log (\frac{1}{\sigma^2}| \Sigma_{x, \tilde{N}}|)$ have come from equation (\ref{eqn:opt_fullform}). Finally, the only factors remaining in post-cancellation of the second term are the eigenvalues of $\Sigma_{x,\tilde{N}}$ with indices smaller than or equal to $\tilde{N}-K$, transforming that term into what is presented in equation (\ref{eqn:lingaussmmi_small}). This completes the proof.  
\end{proof}

Theorem \ref{thm:lingaussmmi_small} is a straightforward generalization of Theorem \ref{thm:lingaussmmi}. The only additional insight is that the Frobenius norm constraint $F$ acts to remove the smallest principal components from our maximum mutual information calculation. This role is similar to that of the hidden dimension. 

The MMI calculations in Theorems \ref{thm:lingaussmmi} and \ref{thm:lingaussmmi_small} will be seen to be very important - nearly every other case is a generalization of these two theorems! We've plotted some sample MMI curves as a function of the Frobenius norm $F$ for this family of architectures in Figure \ref{fig:MMI_curves}.

\begin{figure}[t]
    \centering
    \includegraphics[clip, trim=2.75cm 5cm 4cm 4.5cm, width=0.45\linewidth]{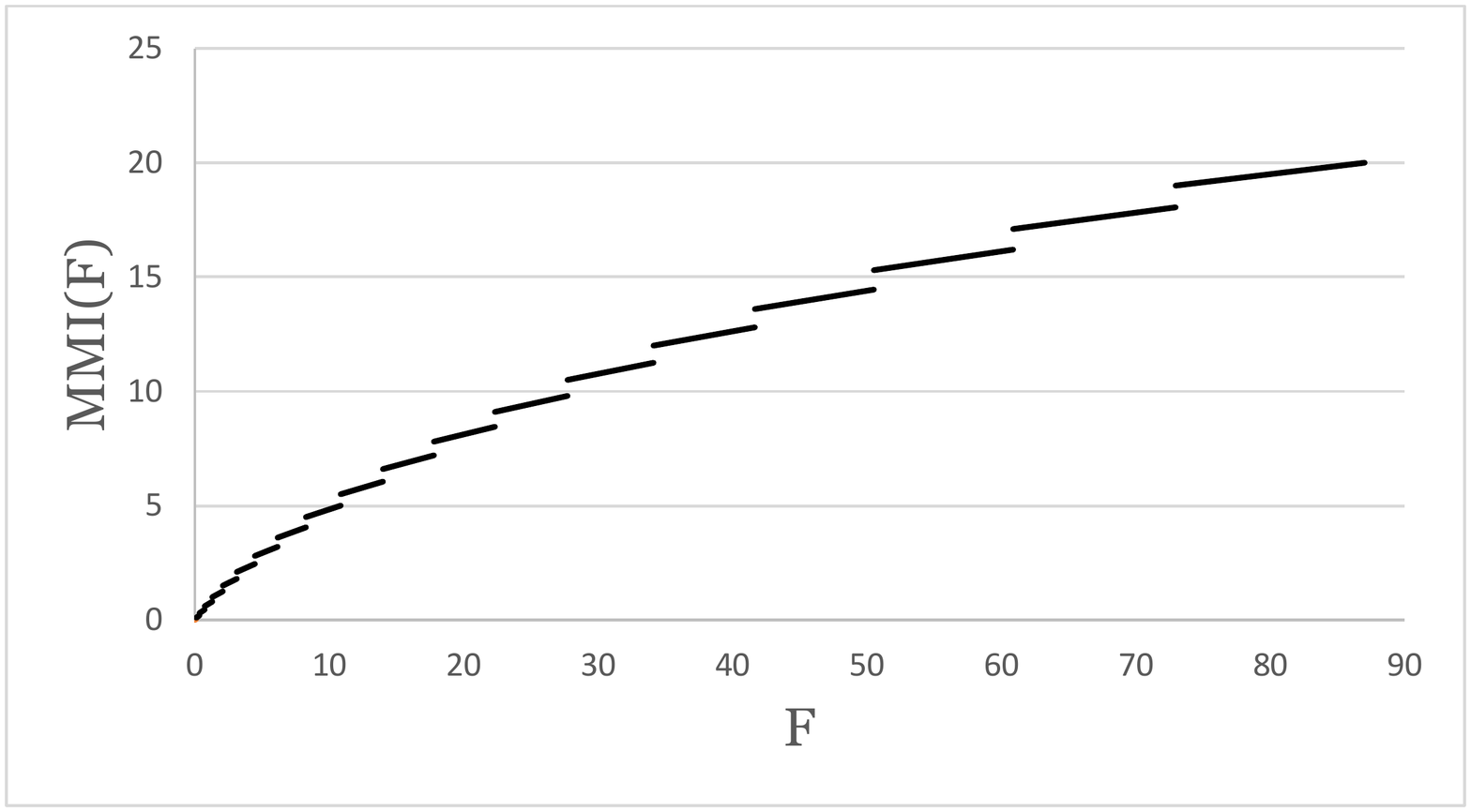}
    \includegraphics[clip, trim=2.5cm 9.75cm 3.75cm 10cm, width=0.45\linewidth]{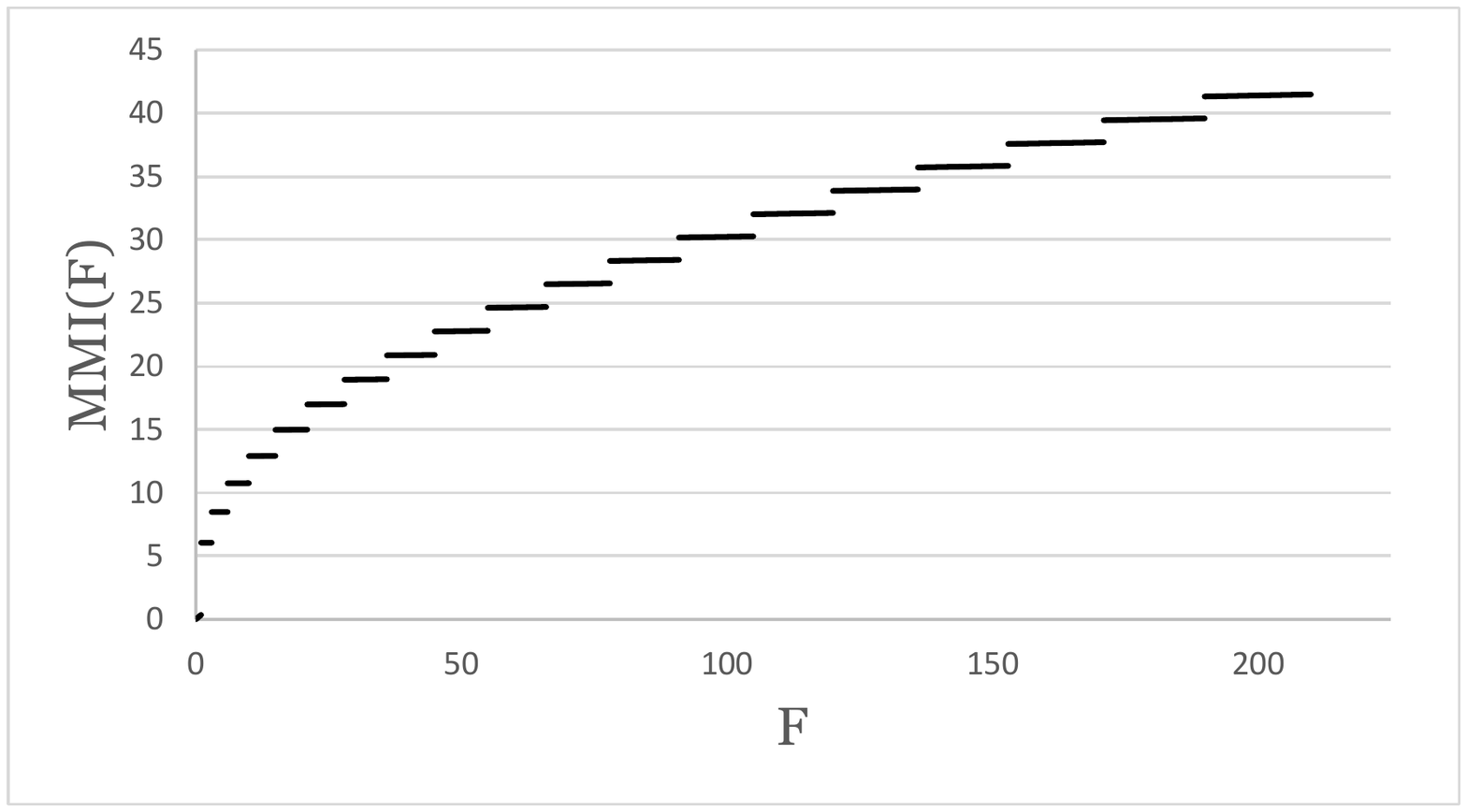}
    \caption{MMI(F) for a fully connected single layer architecture with 50 hidden units and $\sigma^2=1$ on a $100$ dimensional input dataset whose eigenvalues are modeled via [with indices starting at $1$] (Left) ${\lambda_i = e^{-0.1(i-1)}}$ (Right) ${\lambda_i = \frac{1}{i}}$.}
    \label{fig:MMI_curves}
\end{figure}

\subsection{Convolutional Case}
We will now discuss the case where our fully connected layer is replaced with a convolutional layer. We will keep the Gaussian data assumption and the linear activation assumption. For notational simplicity, we will also assume that our convolutional calculations involve non-overlapping strides, and that we have translation invariant statistics in our dataset. These assumptions can be dropped without effecting the insight that follows.


\begin{theorem}\label{thm:linconvmmi}
Let $\Sigma_x$ be a positive definite matrix and let $\sigma^2 > 0$.  Let $N_0$, $N_B$, and $N_f$ be natural numbers such that $N_B$ divides $N_0$, Let $X \sim \mc{N}(0; \Sigma_x)$, $X \in \mathbb{R}^{N_0}$, $Z|X,W,b \sim \mc{N}(X \circledast W+b; ~\sigma^2 Id_{N_1})$, $Z \in \mathbb{R}^{N_0/N_B}$
Where ${W \in \mathbb{R}^{N_f \times N_B}}$ and $X \circledast W = \begin{bmatrix} \tilde{X}_{1}^TW^T & \tilde{X}_{2}^TW^T & \cdots & \tilde{X}_{N_0/N_B}^TW^T \end{bmatrix}^T$ with $\tilde{X}_j$ denoting the slice of $X$ on indices ${(j-1)N_B +1}$ through $ jN_B$. Thus $ X \circledast W$ is a convolution applied to a vectorized input with non-overlapping stride and $N_f$ filters. Suppose $\Sigma_x$ is block diagonal with identical blocks (translation invariant statistics) given by the $N_B\times N_B$ matrix $\Sigma_{\tilde{x}}$. Let ${M_{LFC}(F; \Sigma_{x}, N_0, N_1)}$ denote the maximum mutual information of the linear fully connected network given by Theorems \ref{thm:lingaussmmi} and \ref{thm:lingaussmmi_small}. Let $F>0$ be fixed and let $\text{MMI}(X;Z) \triangleq \underset{Tr(W^TW) \leq F}{\text{ sup }} I_{W,b}(X;Z)$ Then $\text{MMI}(X;Z) = \frac{N_0}{N_B}M_{LFC}(F; \Sigma_{\tilde{x}}, N_B, N_f)$.
\end{theorem}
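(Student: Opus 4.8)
The plan is to reduce the convolutional problem \emph{exactly} to the fully connected problem of Theorem \ref{thm:lingaussmmi} by exploiting the product structure that weight sharing and translation invariance jointly impose. First I would rewrite the convolution $X \circledast W$ as an ordinary linear map $\mathcal{W}X$: since the single filter bank $W \in \mathbb{R}^{N_f \times N_B}$ is applied to each non-overlapping slice $\tilde{X}_j$, the effective weight matrix is the block-diagonal Kronecker product $\mathcal{W} = Id_{N_0/N_B} \otimes W$. Because $X$ is Gaussian and the map is linear (the bias $b$ again only shifts the mean and drops out of the information, exactly as in Theorem \ref{thm:lingaussmmi}), I can invoke the same Gaussian identity used there to write $I_{W,b}(X;Z) = \frac{1}{2}\log|Id + \frac{1}{\sigma^2}\mathcal{W}\Sigma_x\mathcal{W}^T|$.

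Next I would substitute the translation-invariant covariance $\Sigma_x = Id_{N_0/N_B} \otimes \Sigma_{\tilde{x}}$ and use the mixed-product property of the Kronecker product to collapse the argument of the determinant:
\begin{equation}
    Id + \tfrac{1}{\sigma^2}\mathcal{W}\Sigma_x\mathcal{W}^T = Id_{N_0/N_B} \otimes \left(Id_{N_f} + \tfrac{1}{\sigma^2} W \Sigma_{\tilde{x}} W^T\right).
\end{equation}
Applying the Kronecker determinant identity $|A \otimes B| = |A|^{\dim B}|B|^{\dim A}$ with $A = Id_{N_0/N_B}$ (so $|A|=1$) then yields $I_{W,b}(X;Z) = \frac{N_0}{N_B}\cdot\frac{1}{2}\log|Id_{N_f} + \frac{1}{\sigma^2}W\Sigma_{\tilde{x}}W^T|$. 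The right-hand factor is precisely the mutual information of a single-layer linear fully connected network on a Gaussian input of covariance $\Sigma_{\tilde{x}}$, input dimension $N_B$, hidden dimension $N_f$, and weight matrix $W$.

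Finally I would handle the optimization. Because the shared filter $W$ is the only free weight, the constraint $Tr(W^TW) \leq F$ stated in the theorem \emph{is} the Frobenius constraint of the per-block fully connected problem: there is no budget to redistribute across blocks, and all $N_0/N_B$ copies of the objective are identical and governed by the same $W$. Hence
\begin{equation}
    \text{MMI}(X;Z) = \frac{N_0}{N_B}\underset{Tr(W^TW) \leq F}{\sup}~\frac{1}{2}\log\left|Id_{N_f} + \tfrac{1}{\sigma^2}W\Sigma_{\tilde{x}}W^T\right| = \frac{N_0}{N_B}M_{LFC}(F;\Sigma_{\tilde{x}}, N_B, N_f),
\end{equation}
the last equality being the definition of $M_{LFC}$ together with Theorems \ref{thm:lingaussmmi} and \ref{thm:lingaussmmi_small}.

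I expect the only real obstacle to be bookkeeping rather than mathematics: correctly identifying the two Kronecker factorizations (of $\mathcal{W}$ and of $\Sigma_x$) and confirming that the mixed-product and determinant identities apply verbatim under the dimension labels. The conceptually important point to state carefully is that \emph{weight sharing} is what makes the factor come out as a clean multiplicative $N_0/N_B$ --- the same $W$ enters every block, so the Frobenius budget $F$ is not split, and the convolutional MMI is simply $N_0/N_B$ scalings of a single fully connected block's MMI.
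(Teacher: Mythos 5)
Your proposal is correct and takes essentially the same route as the paper: both represent the shared-filter convolution as a block-diagonal (Kronecker) weight matrix, use the block structure of that matrix and of $\Sigma_x$ to factor the log-determinant into $N_0/N_B$ identical copies of the single-block fully connected objective, and observe that weight sharing keeps the full Frobenius budget $F$ on the single filter $W$. The only cosmetic difference is that the paper applies the matrix determinant lemma before factoring (working with $\sigma^2\Sigma_x^{-1}+\tilde{W}^T\tilde{W}$) while you factor $Id+\frac{1}{\sigma^2}\mathcal{W}\Sigma_x\mathcal{W}^T$ directly via the Kronecker mixed-product identity.
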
 

\begin{proof}
We can view the output of the convolution as a matrix product of $\tilde{W}$ and $X$ where $\tilde{W}$ is a block diagonal matrix in which every nonzero block is the matrix $W$. We can then follow the proof of Theorem \ref{thm:lingaussmmi} until the matrix determinant lemma step with $\tilde{W}$ in place of $W$. From here, we can further factor as follows by noting that $\tilde{W}^T\tilde{W}$ and $\Sigma_x$ are both block diagonal:
\begin{align}
    log ~|\sigma^2 \Sigma_x^{-1} + \tilde{W}^T\tilde{W}| =\frac{N_0}{N_B}log~|\sigma^2\Sigma_{\tilde{x}}^{-1} + W^TW|,~~
    log~\left|\frac{\Sigma_x}{\sigma^2} \right| = \frac{N_0}{N_B}log~\left|\frac{\Sigma_{\tilde{x}}}{\sigma^2}\right|
\end{align}
from which we are back to the original optimization problem in the proof of Theorems \ref{thm:lingaussmmi} and \ref{thm:lingaussmmi_small}, just multiplied by a factor of $N_0/N_B$. 
\end{proof}

We see that each convolution operation acts as a parallel calculation, and are therefore summed in the MMI calculation. Furthermore, each convolution operation has an MMI bottleneck of $\min \{N_B, N_f \}$ which is the minimum of the (vectorized) block-size and the number of output channels. Note that the principal components used in the calculation correspond only to features that occur in a single convolution operation.


\section{Single Layer Relu Networks}
We will now move on to studying what happens to the MMI values when we place relu activations on the hidden layers. The answer is quite nice: nothing changes at all! Thus we can take all of the insight from the previous sections and apply them to relu networks. Unfortunately, it takes quite a bit of setup and rigor to prove this fact. 

We can provide some insight into the rigor that follows before diving into the proofs. Essentially, we will show that the mutual information between $X$ and $Z$ in a relu network is always bounded above by that of a corresponding linear network. However, we will be able to construct a sequence of relu networks whose marginal distributions on $Z$ converge (weakly) to that of the maximum mutual information solution of the linear architectures. All we will need to do then is have some notion of continuity of mutual information with respect to these marginals and we will have our proof. This penultimate step is taken care of primarily in Lemma \ref{lemma:relu_info_diff}.


\begin{lemma}\label{lemma:Z_relu_entropy}
Take all of the assumptions of either Theorem \ref{thm:lingaussmmi}, Theorem \ref{thm:lingaussmmi_small}, or Theorem \ref{thm:linconvmmi}. Let $W, b$ be fixed and define ${Z_{relu}}$ through a new model given by $Z_{relu}|X,W,b \sim \mc{N}(\text{relu}(WX+b); ~\sigma^2 Id_{N_1})$ for the fully connected case, or $Z_{relu}|X,W,b \sim \mc{N}(\text{relu}(X \circledast W+b); ~\sigma^2 Id_{N_1})$ for the convolutional case. Let $\Sigma_Z$ denote the covariance matrix of $Z$ as defined in Theorem \ref{thm:lingaussmmi} and let $\Sigma_{Z_{relu}}$ denote  the covariance matrix of $Z_{relu}$. 
Then:
\begin{equation}
    H(Z_{relu}) \leq H(Z)
\end{equation}
\end{lemma}


\begin{lemma}\label{lemma:relu_info_diff}
Take all of the assumptions of either Theorem \ref{thm:lingaussmmi} or Theorem \ref{thm:lingaussmmi_small} and all definitions from lemma \ref{lemma:Z_relu_entropy}. Denote the marginal probability laws of $Z$ and ${Z_{relu}}$ as $\mathbb{P}$ and $\hat{\mathbb{P}}$ with densities denoted $p(z)$ and $\hat{p}(z)$. Let $\mc{B}(Z)$ be the set of Borel measurable sets on $\mc{Z}$, and let $\delta$ denote the total variation distance between $\mathbb{P}$ and $\hat{\mathbb{P}}$. That is, $\delta = \sup_{A \in \mc{B}(Z)} |\mathbb{P}(A) - \hat{\mathbb{P}}(A)| = \frac{1}{2}\int |p(z) - \hat{p}(z)| dz$. Finally, let $h_2(\cdot)$ denote the binary entropy function. Let $\epsilon > 0$. Then for all ${W,b}$ such that $\delta < \frac{1}{e}$, there exists a non-negative function $g(\delta(W,b))$ which is continuous in $\delta$ from the right at $\delta=0$, has $g(0) = 0$, and :
\begin{equation}
    |I_{W,b}(X;Z) - I_{W,b}(X;Z_{relu})| \leq g(\delta(W,b))
\end{equation}
\end{lemma}

Proofs of lemmas can be found in the supplementary material accompanying this paper.

\begin{theorem}\label{thm:relummi}
Take all of the assumptions of either Theorem \ref{thm:lingaussmmi} or Theorem \ref{thm:lingaussmmi_small}, and take all definitions from lemmas \ref{lemma:Z_relu_entropy} and \ref{lemma:relu_info_diff}. Then the results of Theorem \ref{thm:lingaussmmi} and Theorem \ref{thm:lingaussmmi_small} hold for $\text{MMI}(X; Z_{relu})$. That is:
\begin{equation}
    \text{MMI}(X; Z_{relu}) = \text{MMI}(X; Z)
\end{equation}
\end{theorem}
\begin{proof}
First, we note that, for all $W,b$, we have:
\begin{align}
    I_{W,b}(X;Z_{relu}) = H_{W,b}(Z_{relu}) - H(Z_{relu}|X) &= H_{W,b}(Z_{relu}) - H(Z|X) \notag \\
    &\leq H_{W, b}(Z) - H(Z|X) = I_{W,b}(X,Z)
\end{align}
where the inequality follows from lemma \ref{lemma:Z_relu_entropy}. It follows immediately that $\text{MMI}(X;Z_{relu}) \leq \text{MMI}(X;Z)$. We now show that $\text{MMI}(X;Z)$ is an achievable value for $I_{W,b}(X;Z_{relu})$ given the constraint ${Tr(W^TW) \leq F}$. 
Fix ${\frac{1}{e} > \epsilon > 0}$. Then given any value of $F$, we can set $b$ large enough in each dimension such that ${\hat{\mathbb{P}}({\cup_{i=1}^{N_1} \{z | z_i  < 0 \}})}$ is less than $\epsilon$ for all $W$ satisfying ${Tr(W^TW) \leq F}$. When this is the case, the total variation between $\mathbb{P}$ and $\hat{\mathbb{P}}$ is also bounded above by $\epsilon$. Then by lemma \ref{lemma:relu_info_diff}, there exists $b^* \in \mathbb{R}^{N_1}$ such that:
\begin{equation}
    |I_{W,b^*}(X;Z) - I_{W,b^*}(X;Z_{relu})| \leq g(\epsilon)
\end{equation}
for all $W$ satisfying this constraint, and where the right hand side of this inequality is a continuous function of $\epsilon$ and satisfies $\lim_{\epsilon \to 0^+}g(\epsilon) = g(0) = 0$. Thus we can achieve ${I_{W,b^*}(X;Z_{relu}) \geq I_{W,b^*}(X;Z) - g(\epsilon)}$ for all $W$ satisfying the constraint. Since $g \geq 0$ can be made arbitrary small via continuity, we can achieve:
\begin{equation}\label{eqn:relu_achieve}
    I_{W,b^*}(X;Z_{relu}) \geq I_{W,b^*}(X;Z)~\forall ~W~ \text{s.t.}~ Tr(W^TW)\leq F
\end{equation}
Inputting the MMI achieving matrix from Theorems \ref{thm:lingaussmmi} and \ref{thm:lingaussmmi_small} into (\ref{eqn:relu_achieve}) yields the result. 
\end{proof}

We see that all of the insights that we obtained for linear activated networks hold for relu activated networks as well. 


\section{Single Layer Fully Connected Networks with Bijective Activation Functions}
We will now move on to deriving the MMI for one final family of architectures - single layer networks with bijective activation functions. This family includes sigmoidal activations, tanh activations, selu activations, and much more.  Conveniently, we once again find ourselves looking back to the linear case for its calculation. 

\begin{theorem}\label{lem:bijective_activation}
Let $A$ denote the pre-activated representation variable of a single layer neural network. Let $\phi$ be a bijective activation whose log-derivative has finite expectation, and let the representation $Z$ be given by ${Z=\phi(A)}$. Then:
\begin{equation}
    I(X;Z) = I(X;A)
\end{equation}

\end{theorem}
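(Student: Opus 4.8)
The plan is to exploit the fact that mutual information is invariant under a bijective, differentiable reparametrization of one of its arguments, and to make this concrete through the change-of-variables formula for differential entropy that is already used throughout the earlier proofs. I would begin by writing both mutual informations in entropy-difference form, $I(X;Z) = H(Z) - H(Z \mid X)$ and $I(X;A) = H(A) - H(A \mid X)$, with $A$ the pre-activation and $Z = \phi(A)$.

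Next I would apply the Jacobian (change-of-variables) formula for differential entropy to the map $Z = \phi(A)$. Since $\phi$ acts coordinate-wise and is a bijection, its Jacobian is diagonal, and the marginal entropy satisfies $H(Z) = H(A) + \mathbb{E}\left[\sum_i \log|\phi'(A_i)|\right]$. I would denote this correction by $c \triangleq \mathbb{E}\left[\sum_i \log|\phi'(A_i)|\right]$, which is finite precisely because the log-derivative of $\phi$ has finite expectation by hypothesis. The identical computation performed conditionally on $X = x$ gives $H(Z \mid X = x) = H(A \mid X = x) + \mathbb{E}\left[\sum_i \log|\phi'(A_i)| \mid X = x\right]$, and averaging over $X$ yields $H(Z \mid X) = H(A \mid X) + c$, with the \emph{same} constant $c$ because $\phi$ does not depend on $X$.

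Subtracting the conditional identity from the marginal one, the correction terms cancel exactly, leaving $I(X;Z) = H(Z) - H(Z \mid X) = H(A) - H(A \mid X) = I(X;A)$, which is the claim.

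The main obstacle is one of well-definedness rather than of any substantive computation: differential entropies can be infinite, so I must ensure this is a cancellation of finite quantities and not an ill-posed $\infty - \infty$. This is exactly the role of the finiteness assumption on $\mathbb{E}[\log|\phi'(A)|]$, which guarantees $c$ is a genuine finite constant and that $H(Z)$ and $H(Z \mid X)$ differ from $H(A)$ and $H(A \mid X)$ by the same finite amount. I would also check the regularity required for the change-of-variables formula, namely that $A$ admits a density (it does, being Gaussian or a linear image thereof) and that $\phi$ is differentiable almost everywhere with the stated diagonal Jacobian. As a robustness check independent of any smoothness, the result also follows from the data-processing inequality applied in both directions: the chain $X \to A \to Z$ gives $I(X;Z) \le I(X;A)$, while $X \to Z \to A$ — valid since $A = \phi^{-1}(Z)$ — gives the reverse inequality, forcing equality.
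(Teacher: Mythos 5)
Your proposal is correct and follows essentially the same route as the paper's proof: both write the mutual informations as entropy differences and apply the change-of-variables formula for differential entropy to $Z=\phi(A)$, observing that the $\mathbb{E}\left[\log\left|\frac{d\phi}{da}\right|\right]$ correction appears identically in $H(Z)$ and $H(Z\mid X)$ and cancels. Your added remarks on finiteness (avoiding $\infty-\infty$) and the two-sided data-processing argument are sound supplements but do not change the underlying approach.
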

\begin{proof}
    This follows immediately from the following set of equalities:
    \begin{align}
        I(X;Z) = H(Z) - H(Z|X), ~H(Z) = H(A) + \mathbb{E}_{\mathbb{P}_A} \left[ log \left|\frac{d\phi}{da}\right| \right] \notag \\
        H(Z|X) = H(A|X) + \mathbb{E}_{\mathbb{P}_{X,A}} \left[ log \left|\frac{d\phi}{da} \right| \right], ~
        \mathbb{E}_{\mathbb{P}_{X,A}} \left[ log \left|\frac{d\phi}{da} \right| \right] =  \mathbb{E}_{\mathbb{P}_A} \left[ log \left|\frac{d\phi}{da}\right| \right] \notag \\
        I(X;A) = H(A) - H(A|X)
    \end{align}
    where $\left| \frac{d\phi}{da} \right|$ is the determinant of the Jacobian matrix of $\phi$. 
\end{proof}

From this theorem, we can immediately see that, if we take the linear case and place our noise injection on the pre-activated variable, and then use ${Z=\phi(A)}$, we will have the same MMI as we did in the linearly activated case.


\section{Multilayer Fully Connected Linear,  Relu, and Bijective Networks}
We finally move on to the multi-layer case for linear and relu fully connected networks. 


\begin{theorem}
Take all assumptions and definitions from the previous theorems corresponding to a fully connected network (linear or relu), but assume that we are using a $K$ layer neural network instead of a single layer network, with the noise placed on the $K^{th}$ layer. Let $N_0, N_1, \cdots, N_K$ denote the number of hidden units in each layer. Redefine $\tilde{N}$ to ${\tilde{N} \triangleq \min(N_0, N_1, \cdots, N_K)}$. Then the results of those previous theorems hold. 
\end{theorem}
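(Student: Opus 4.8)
The plan is to reduce the multilayer problem to the single-layer theorems by collapsing the composition of linear maps into one effective matrix. In the linear case, conditioned on all weights and biases, the network output is $Z = W_K(\cdots(W_1 X + b_1)\cdots) + b_K + \text{noise}$, which equals $W_{\text{eff}} X + b_{\text{eff}} + \text{noise}$ with $W_{\text{eff}} = W_K \cdots W_1$ and $b_{\text{eff}}$ an affine function of the biases. Since the mutual information depends on the parameters only through $W_{\text{eff}} \Sigma_x W_{\text{eff}}^T$ (exactly as in the opening step of the proof of Theorem \ref{thm:lingaussmmi}), the entire multilayer problem becomes a single-layer problem in the variable $W_{\text{eff}}$, subject to $Tr(W_{\text{eff}}^T W_{\text{eff}}) \leq F$ together with a rank restriction. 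The key observation is that $\text{rank}(W_i) \leq \min(N_{i-1}, N_i)$ and rank is submultiplicative under products, so $\text{rank}(W_{\text{eff}}) \leq \min(N_0, N_1, \ldots, N_K) = \tilde{N}$. This is precisely the rank constraint that produced the reduced optimization problem (\ref{prob:MMIgl_eigform_simp}), now with $\tilde{N}$ reinterpreted as the global minimum width. Hence the water-filling analysis of Theorems \ref{thm:lingaussmmi} and \ref{thm:lingaussmmi_small} applies verbatim and yields the claimed upper bound.

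For achievability in the linear case, I would take the rank-$\tilde{N}$ optimizer $W^*$ furnished by those theorems and exhibit a factorization $W^* = W_K \cdots W_1$ respecting the layer dimensions and the Frobenius budget. Because every width satisfies $N_i \geq \tilde{N}$, the rank-$\tilde{N}$ map can be routed through each layer: using the singular value decomposition of $W^*$, one distributes the $\tilde{N}$ nonzero singular values across the factors and pads the remaining directions with zero blocks, and then normalizes the factors so that the product satisfies $Tr((W^*)^T W^*) \leq F$. Such a factorization always exists, so the single-layer MMI value is attained and the linear case is complete.

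For the relu case I would combine two ingredients already established. The upper bound $\text{MMI}(X;Z_{relu}) \leq \text{MMI}(X;Z)$ follows by extending the entropy comparison of Lemma \ref{lemma:Z_relu_entropy} to the composite mean function: conditioning on $X$ leaves only the injected Gaussian noise, so $H(Z_{relu}|X) = H(Z|X)$, and the relu nonlinearities cannot increase the output's differential entropy, whence $I_{W,b}(X;Z_{relu}) \leq I_{W,b}(X;Z)$ just as in Theorem \ref{thm:relummi}. For achievability I would push the bias-saturation argument of Theorem \ref{thm:relummi} through all $K$ layers, choosing the biases inductively so that each relu acts as the identity off an $\epsilon$-probability set; the total-variation distance between the final marginal and that of the corresponding linear multilayer network then vanishes, and Lemma \ref{lemma:relu_info_diff} supplies the continuity of mutual information needed to recover the linear MMI value in the limit.

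The hard part will be the relu achievability across multiple layers. Saturating an intermediate relu requires its pre-activation to be positive with high probability, which depends on the outputs of the already-saturated earlier layers, so the biases must be selected sequentially and one must verify that the induced weak convergence of the final marginal still falls within the scope of Lemma \ref{lemma:relu_info_diff}, whose hypotheses were stated for a single layer. A secondary point requiring care is confirming that the Frobenius constraint is most naturally read as a constraint on the effective map $W_{\text{eff}}$ rather than on each individual factor, since per-factor constraints would alter the attainable norm of the product and hence the bound.
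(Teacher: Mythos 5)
Your proposal follows essentially the same route as the paper's proof: collapse the linear layers into the effective product $W_K\cdots W_2W_1$, observe that the resulting Gram matrix has rank at most $\tilde{N}=\min(N_0,\dots,N_K)$ so the single-layer water-filling analysis applies, and handle the relu case by choosing each layer's bias large enough that the marginal total variation from the corresponding linear network is small, then invoking Lemma \ref{lemma:relu_info_diff}. You are in fact more explicit than the paper on the achievability step (factoring the optimal $W^*$ through the layer widths) and on reading the Frobenius constraint as a constraint on the effective map, both of which the paper leaves implicit.
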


\begin{proof}
In the linear case, we can take the proof of theorem \ref{thm:lingaussmmi} by replacing $W$ with ${W_K \cdots W_2 W_1}$ (the biases have no effect on the mutual information). We will only need to note that the corresponding inner-product matrix, ${W_1^TW_2^T \cdots W_K^TW_K\cdots W_2W_1}$ has rank $\tilde{N}$ (as redefined in this theorem's hypothesis). In the relu case we can follow the exact sequence of steps that were performed in the single layer case, noting that we can get the marginal total variation ${\delta < \epsilon}$ (for any fixed $\epsilon>0$) by fixing each bias to be large enough such that sufficiently small amounts of marginal probability are contained in the saturated regions of each layer. 
\end{proof}

We see that all of our previous insights for these families of fully connected networks hold. However, a new MMI bottleneck can be identified: $\min \{N_0, N_1, \cdots, N_K\}$ - the dimension of the smallest layer of the network. 


\section{Conclusion}
We have rigorously derived the Maximum Mutual Information for a large number of neural network architectures, and provided several insights along the way. Nearly every case generalizes from Theorems \ref{thm:lingaussmmi} and \ref{thm:lingaussmmi_small}. All of the studied fully connected single layer architectures have the exact same MMI expressions as those studied in Theorems \ref{thm:lingaussmmi} and \ref{thm:lingaussmmi_small}, and single layer convolutional architectures require only a small adjustment from those calculations (Theorem \ref{thm:linconvmmi}). Multi-layer networks generalize from these as well, but with the primary architectural parameter being given by the smallest hidden dimension in the network.  Thus great care should be given to the design of this layer when attempting to control the network's mutual information.

While we have not provided MMI calculations for every existing setup (doing so in one paper would be nearly impossible given the ever-expanding set of neural architectures in existence), we hope that these calculations and insights provide enough detail to approximate any architecture  the reader may be interested in studying, and that the proofs provided are generalizable enough for when an approximation is not enough. More architectures are the subject of future work. Of particular future interest are the inherently lossy dropout mechanisms, pooling strategies, normalization, and skip layers. 

\bibliographystyle{ieeetran}
\bibliography{refs}

\title{On the Maximum Mutual Information Capacity of Neural Architectures - Appendix}
\maketitle

\section{Proofs of Lemmas}

We repeat the theorem statements (without their proofs) alongside these lemmas to minimize the amount of back and fourth jumps the reader must perform to read them, as we refer to the theorems several times in the lemmas.

\begin{theorem}\label{thm:lingaussmmi}
Let $\Sigma_x$ be a positive definite matrix and let $\sigma^2 > 0$. Let $N_0$ and $N_1$ be natural numbers representing the input and hidden dimensions. Let $\mc{N}(\mu; A)$ denote the Gaussian distribution with mean $\mu$ and covariance matrix $A$. Let $ X \sim \mc{N}(0; \Sigma_x)$, $X \in \mathbb{R}^{N_0}$, $Z|X,W,b \sim \mc{N}(WX+b; ~\sigma^2 Id_{N_1})$, $Z \in \mathbb{R}^{N_1}$, where $W \in \mathbb{R}^{N_1 \times N_0}$, $Id_{N_1}$ is the identity matrix in $N_1$ dimensions, and $b \in \mathbb{R}^{N_1}$ is the bias vector. Let ${\tilde{N}=\min(N_0, N_1)}$. Let $\Sigma_{x,\tilde{N}}$ denote $\tilde{N} \times \tilde{N}$ diagonal matrix containing the $\tilde{N}$ largest eigenvalues of $\Sigma_x$. Let $\lambda^x_{\tilde{N}}$ denote the smallest eigenvalue of $\Sigma_{x,\Tilde{N}}$, and let $\rho_{\tilde{N}} \triangleq \sigma^2 \left(\frac{\tilde{N}}{\lambda^x_{\tilde{N}}} - Tr(\Sigma_{x,\tilde{N}}^{-1})\right)$. Let $F \geq \rho_{\tilde{N}}$, $I_{W,b}(X;Z) \triangleq \mathbb{E}_{\sim p(x,z|W, b)}\left[log \frac{p(z|x,W, b)}{p(z|W, b)} \right]$, and define $\text{MMI}(X;Z) \triangleq \underset{Tr(W^TW) \leq F}{\text{ sup }} I_{W,b}(X;Z)$. Then: 
\begin{equation}
    \text{MMI}(X;Z) = \frac{\tilde{N}}{2}log \left(\frac{F + \sigma^2Tr(\Sigma_{x,\tilde{N}}^{-1})}{\sigma^2 \tilde{N}} \right) + \frac{1}{2}log~|\Sigma_{x,\tilde{N}}|
\end{equation}
\end{theorem}

\begin{lemma}\label{lemma:ordered_rho}
Take all of the assumptions from Theorem \ref{thm:lingaussmmi} except for the assumption that ${F \geq \rho}$. Let the $k^{th}$ largest eigenvalue of $\Sigma_{x}$ be denoted by $\lambda^x_k$. Let $K$ be a natural number, ${K < \tilde{N}}$, and let $\Sigma_{x,\tilde{N}-K}$ denote the ${(\tilde{N}-K) \times (\tilde{N}-K)}$ diagonal matrix containing the $\tilde{N}-K$ largest eigenvalues of $\Sigma_x$. Now, let $\rho_{\tilde{N}-K} \triangleq \sigma^2 \left(\frac{\tilde{N}-K}{\lambda^x_{\tilde{N}-K}} - Tr\left(\Sigma_{x,\tilde{N}-K}^{-1}\right)\right)$. Then:
\begin{equation}
    0 = \rho_{1} \leq \cdots \leq \rho_{\tilde{N}-K} \leq \rho_{\tilde{N}-K+1} \leq \cdots \leq \rho_{\tilde{N} - 1} \leq \rho_{\tilde{N}}
\end{equation}
\end{lemma}
\begin{proof}
First, $\rho_1$ is zero since: 
\begin{equation}
    \rho_1 = \sigma^2\left(\frac{1}{\lambda^x_{1}} - \frac{1}{\lambda^x_{1}} \right)
\end{equation}

Next, we note that the difference ${\rho_{\tilde{N}-K+1} - \rho_{\tilde{N}-K}}$ is given by:
\begin{align}
     \quad \sigma^2\left(\frac{\tilde{N} - K + 1}{\lambda^x_{\tilde{N}-K+1}} - \frac{\tilde{N} - K}{\lambda^x_{\tilde{N}-K}} - \frac{1}{\lambda_{\tilde{N}-K+1}^x} \right)
    &=  \sigma^2\left(\frac{\tilde{N} - K}{\lambda^x_{\tilde{N}-K+1}} - \frac{\tilde{N} - K}{\lambda^x_{\tilde{N}-K}} \right) \notag \\
    &\geq \sigma^2\left(\frac{\tilde{N} - K}{\lambda^x_{\tilde{N}-K}} - \frac{\tilde{N} - K}{\lambda^x_{\tilde{N}-K}} \right) =  0
\end{align}
completing the proof.
\end{proof}

\begin{theorem}\label{thm:lingaussmmi_small}
Take all of the assumptions from Theorem \ref{thm:lingaussmmi} except for the assumption that $F \geq \rho_{\tilde{N}}$ and take all definitions from lemma \ref{lemma:ordered_rho}. Let $\rho_{\tilde{N}-K+1} \geq F \geq \rho_{\tilde{N}-K}$. Then $\text{MMI}(X;Z)$ is given by:
\begin{align}\label{eqn:lingaussmmi_small}
    \frac{\tilde{N}-K}{2}log \left(\frac{F + \sigma^2Tr(\Sigma_{x,\tilde{N}-K}^{-1})}{\sigma^2 (\tilde{N}-K)} \right) + \frac{1}{2}log~|\Sigma_{x,\tilde{N}-K}|
\end{align}
\end{theorem}

\begin{theorem}\label{thm:linconvmmi}
Let $\Sigma_x$ be a positive definite matrix and let $\sigma^2 > 0$.  Let $N_0$, $N_B$, and $N_f$ be natural numbers such that $N_B$ divides $N_0$, Let $X \sim \mc{N}(0; \Sigma_x)$, $X \in \mathbb{R}^{N_0}$, $Z|X,W,b \sim \mc{N}(X \circledast W+b; ~\sigma^2 Id_{N_1})$, $Z \in \mathbb{R}^{N_0/N_B}$
Where ${W \in \mathbb{R}^{N_f \times N_B}}$ and $X \circledast W = \begin{bmatrix} \tilde{X}_{1}^TW^T  \tilde{X}_{2}^TW^T  \cdots & \tilde{X}_{N_0/N_B}^TW^T \end{bmatrix}^T$ with $\tilde{X}_j$ denoting the slice of $X$ on indices ${(j-1)N_B +1}$ through $ jN_B$. Thus $ X \circledast W$ is a convolution applied to a vectorized input with non-overlapping stride and $N_f$ filters. Suppose $\Sigma_x$ is block diagonal with identical blocks (translation invariant statistics) given by the $N_B\times N_B$ matrix $\Sigma_{\tilde{x}}$. Let ${M_{LFC}(F; \Sigma_{x}, N_0, N_1)}$ denote the maximum mutual information of the linear fully connected network given by Theorems \ref{thm:lingaussmmi} and \ref{thm:lingaussmmi_small}. Let $F>0$ be fixed and let $\text{MMI}(X;Z) \triangleq \underset{Tr(W^TW) \leq F}{\text{ sup }} I_{W,b}(X;Z)$. Then $\text{MMI}(X;Z) = \frac{N_0}{N_B}M_{LFC}(F; \Sigma_{\tilde{x}}, N_B, N_f)$.
\end{theorem} 

\begin{lemma}\label{lemma:Z_relu_entropy}
Take all of the assumptions of either Theorem \ref{thm:lingaussmmi}, Theorem \ref{thm:lingaussmmi_small}, or Theorem \ref{thm:linconvmmi}. Let $W, b$ be fixed and define ${Z_{relu}}$ through a new model given by $Z_{relu}|X,W,b \sim \mc{N}(\text{relu}(WX+b); ~\sigma^2 Id_{N_1})$ for the fully connected case, or $Z_{relu}|X,W,b \sim \mc{N}(\text{relu}(X \circledast W+b); ~\sigma^2 Id_{N_1})$ for the convolutional case. Let $\Sigma_Z$ denote the covariance matrix of $Z$ as defined in Theorem \ref{thm:lingaussmmi} and let $\Sigma_{Z_{relu}}$ denote  the covariance matrix of $Z_{relu}$. 
Then:
\begin{equation}
    H(Z_{relu}) \leq H(Z)
\end{equation}
\end{lemma}
\begin{proof}
Let $\eta$ denote a multivariate Gaussian with covariance $\sigma^2Id_{N_1}$. We further denote ${S \triangleq WX+b}$. Then:
\begin{align}
    Z = S + \eta,  ~~ Z_{relu} = relu(S) + \eta \notag \\
\end{align}
Then we have:
\begin{align}
    H(Z, Z_{relu}) &= H(Z) + H(Z_{relu} | Z) = H(Z_{relu}) + H(Z | Z_{relu})
\end{align}
It follows that:
\begin{align}
    H(Z) = H(Z_{relu}) + H(Z | Z_{relu}) - H(Z_{relu} | Z)
\end{align}
But $H(Z | Z_{relu}) \geq H(Z_{relu} | Z)$ since knowledge of $Z_{relu}$ tells us far less about $Z$ than knowledge of $Z$ tells us about $Z_{relu}$.

\end{proof}

The next lemma relies heavily on the concept of a probabilistic coupling. Thus we will first review this concept.
\begin{defn}
Given two probability models $\mathbb{P}_{\tilde{S}}$ and $\mathbb{Q}_S$ on a list of variables $S$, a \textbf{coupling} of these models is a pair of random variables $(\tilde{S}, \hat{S})$ with joint distribution $\gamma_{\tilde{S}, \hat{S}}$ such that the marginal distributions satisfy $\gamma_{\tilde{S}} = \mathbb{P}_{\tilde{S}}$ and $\gamma_{\hat{S}} = \mathbb{Q}_S$.    
\end{defn}

In particular, we will rely on the concept of a \textit{maximal coupling}, which is defined as follows:
\begin{defn}
Given two probability measures $\mathbb{P}$ and $\hat{\mathbb{P}}$ on a Euclidean space $\mc{X}$ with probability density functions $p$ and $\hat{p}$, the maximal coupling on this pair is defined as follows:

First, define the function ${m: \mc{X}\to [0,1]}$ through $m(a) := \int \text{min}\{p_{Y|X}(b|a), \hat{p}_{Y|X}(b|a) \}db$. Next, define a real number $\rho$ as $\rho := \int m d\mathbb{P}$ and define $J$ as a Bernoulli random variable with success probability $\rho$. Then define variables $U, V$ and $W$ with the following distributions: 
\begin{align}
p_{U} := \frac{m}{\rho}, ~
p_{V} := \frac{p - m}{1- \rho}, ~
p_{W} := \frac{\hat{p} - m}{1- \rho} 
\end{align}
Next define $(\tilde{X}, \hat{X})$ as functions of the above random variables as follows:
\begin{equation}
\begin{cases}
\tilde{X} = \hat{X} = U  & \text{if } J = 1 \\
\tilde{X} = V, ~~~~\hat{X} = W, & \text{if } J = 0
\end{cases}
\end{equation}

The pair ($\tilde{X}$, $\hat{X}$), and its distribution $\gamma$ form the maximal coupling. One can show that $1-\rho$ is the total variation distance between $p$ and $\hat{p}$.
\end{defn}

\begin{lemma}\label{lemma:relu_info_diff}
Take all of the assumptions of either Theorem \ref{thm:lingaussmmi} or Theorem \ref{thm:lingaussmmi_small} and all definitions from lemma \ref{lemma:Z_relu_entropy}. Denote the marginal probability laws of $Z$ and ${Z_{relu}}$ as $\mathbb{P}$ and $\hat{\mathbb{P}}$ with densities denoted $p(z)$ and $\hat{p}(z)$. Let $\mc{B}(Z)$ be the set of Borel measurable sets on $\mc{Z}$, and let $\delta$ denote the total variation distance between $\mathbb{P}$ and $\hat{\mathbb{P}}$. That is, $\delta = \sup_{A \in \mc{B}(Z)} |\mathbb{P}(A) - \hat{\mathbb{P}}(A)| = \frac{1}{2}\int |p(z) - \hat{p}(z)| dz$. Finally, let $h_2(\cdot)$ denote the binary entropy function. Let $\epsilon > 0$. Then for all ${W,b}$ such that $\delta < \frac{1}{e}$, there exists a non-negative function $g(\delta(W,b))$ which is continuous in $\delta$ from the right at $\delta=0$, has $g(0) = 0$, and :
\begin{equation}
    |I_{W,b}(X;Z) - I_{W,b}(X;Z_{relu})| \leq g(\delta(W,b))
\end{equation}
\end{lemma}
\begin{proof}
Let ($\tilde{Z}$, $\hat{Z}$, $\gamma$) denote the maximal coupling between $\mathbb{P}$ and $\hat{\mathbb{P}}$ on $(\tilde{Z}, \hat{Z})$. Then, as ${H(\tilde{Z}|X) = H(\hat{Z}|X)}$ (all of the uncertainty under these conditionals comes from the noise variable $\eta$), we have:
\begin{align}
    |I(X;Z) - I(X;\hat{Z})| &= |H(\tilde{Z}) - H(\hat{Z})|
\end{align}
We can decompose these terms as:
\begin{align}
    H(\tilde{Z}) = H(\tilde{Z}|J) + H(J) - H(J|\tilde{Z}), ~
    H(\hat{Z}) = H(\hat{Z}|J) + H(J) - H(J|\hat{Z})
\end{align}
The $H(J)$ terms cancel in the subtraction. Furthermore, both $H(J|\tilde{Z})$ and $H(J|\hat{Z})$ are bounded by $H(J) = h_2(\delta)$. Thus, by an application of the triangle inequality, we have:
\begin{equation}
    |I(X;Z) - I(X;\hat{Z})| \leq |H(\tilde{Z}|J) - H(\hat{Z}|J)| + 2h_2(\delta)
\end{equation}
We can further decompose the remaining terms as:
\begin{align}
    H(\tilde{Z}|J) = (1-\delta)H(U) + \delta H(V), ~
    H(\hat{Z}|J) = (1-\delta)H(U) + \delta H(W)
\end{align}
leaving us with: 
\begin{equation}\label{eqn:info_eqn}
    |I(X;Z) - I(X;\hat{Z})| \leq \delta|H(V) - H(W)| + 2h_2(\delta)
\end{equation}

Now, observe that the absolute entropy difference, $|H(V) - H(W)|$, expands to:

\begin{align}
    \left| \frac{1}{\delta}\int \left\{(p(z) - m(z))log\left(\frac{p(z)-m(z)}{\delta}\right) - (\hat{p}(z) - m(z))log\left(\frac{\hat{p}(z)-m(z)}{\delta}\right) \right\}dz \right| \notag
\end{align}

Let $A$ denote the set of points in $Range(Z)$ such that $p(z) \geq \hat{p}(z)$. Then the second term in the expression becomes zero over $A$ and the first term becomes zero over $A^c$. We are then left with:
\begin{equation}
    \frac{1}{\delta} \left| \int_A (p(z) - \hat{p}(z))log\left(\frac{p(z)-\hat{p}(z)}{\delta}\right)dz - \int_{A^c}(\hat{p}(z) - p(z))log\left(\frac{\hat{p}(z)-p(z)}{\delta}\right) dz \right| \notag
\end{equation}
which can equivalently be written as:
\begin{align}
    \frac{1}{\delta}\left| \int (p(z) - \hat{p}(z))log~|p(z)-\hat{p}(z)| dz \right | \leq \frac{1}{\delta} \int |p(z) - \hat{p}(z)||log~|p(z)-\hat{p}(z)|| dz
\end{align}
Now, split the domain of this last integral into three disjoint subregions $Q_1$, $Q_2$, and $Q_3$ where $Q_1$ is defined by the condition that $|p(z)-\hat{p}(z)| \leq 2 \delta$, $Q_2$ is given by the condition that $2\delta < |p(z)-\hat{p}(z)| \leq 1$, and $Q_3$ is given by the condition that $|p(z)-\hat{p}(z)| > 1$. We then have by the definition of total variation that $\frac{1}{2}\int |p(z) - \hat{p}(z)| = \delta$, so it must be the case that:
\begin{equation}
    \int_{Q_2} |p(z) - \hat{p}(z)| \leq 2\delta -  \int_{Q_1 \cup Q_3} |p(z) - \hat{p}(z)| \leq 2\delta
\end{equation}
Futhermore, in $Q_2$, $|log|p(z) - \hat{p}(z)||$ is decreasing in $|p(z) - \hat{p}(z)|$, so we can bound it above by $|log(2\delta)|$. We then have that:
\begin{align}
    \int_{Q_2} |p(z) - \hat{p}(z)||log~|p(z)-\hat{p}(z)|| dz  \leq |log~2\delta| \int_{Q_2} |p(z) - \hat{p}(z)| dz \leq 2\delta|log(2\delta)|
\end{align}
Also, in $Q_1$, if $\delta < \frac{1}{e}$, as was assumed in this Lemma's hypothesis, $|p(z) - \hat{p}(z)||log~|p(z)-\hat{p}(z)||$ is decreasing in $|p(z) - \hat{p}(z)|$. It can thus be bounded above by $2\delta|log(2\delta)|.$ Finally, the addition of the noise term $\eta$ in both models guarantees a uniform upper bound via Young's Convolution Inequality given by $|p(z) - \hat{p}(z)| < \frac{1}{(2\pi\sigma^2)^{N_1/2}}$ (independent of both $z$ and $\delta$). A derivation of this bound is provided at the end of this proof. Given that we are in $Q_3$, we then have an upper bound given by $M = max\{1, \frac{1}{(2\pi\sigma^2)^{N_1/2}}\}$.  Thus the $Q_3$ integral can be bounded as:
\begin{equation}
    \int_{Q_3}|p(z) - \hat{p}(z)||log~|p(z)-\hat{p}(z)|| dz \leq |log(M)|\int_{Q_3}|p(z) - \hat{p}(z)| \leq 2\delta|log(M)|
\end{equation}
Thus the full entropy difference expression can be bounded above by $\frac{4\delta |log(2\delta)| + 2\delta|log(M)|}{\delta}$. Plugging this back into equation (\ref{eqn:info_eqn}) gives us:
\begin{equation}
    |I(X;Z) - I(X;\hat{Z})| \leq 4\delta|log(2\delta)| + 2\delta|log(M)| + 2h_2(\delta)
\end{equation}
The term on the right hand side is the desired continuous function $g(\delta)$.

\vspace{1em}

The bound on $|p(z) - \hat{p}(z)|$ is derived from Young's Convolution Inequality. Letting $p_{r}$ be the density function of the random variable $Relu(WX+b)$ and $p_{\eta}$ that of $\eta$. Then $\hat{p}$ is the convolution of $p_r$ and $p_{\eta}$. We then have that $\|\hat{p}\|_{\infty} \leq \int p_r(z)dz \cdot \|p_{\eta}\|_{\infty}$ ($p=1, q=r=\infty$ in Young's inequality). The integral evaluates to one, and we are left with a bound of the supremum of $p_{\eta}$, which is given by its multivariate normal pdf evaluated at its mean. Since $p$ is bounded in exactly the same way, we have the desired result.

\end{proof}

\begin{theorem}\label{thm:relummi}
Take all of the assumptions of either Theorem \ref{thm:lingaussmmi} or Theorem \ref{thm:lingaussmmi_small}, and take all definitions from lemmas \ref{lemma:Z_relu_entropy} and \ref{lemma:relu_info_diff}. Then the results of Theorem \ref{thm:lingaussmmi} and Theorem \ref{thm:lingaussmmi_small} hold for $\text{MMI}(X; Z_{relu})$. That is:
\begin{equation}
    \text{MMI}(X; Z_{relu}) = \text{MMI}(X; Z)
\end{equation}
\end{theorem}

\begin{theorem}\label{lem:bijective_activation}
Let $A$ denote the pre-activated representation variable of a single layer neural network. Let $g$ be a bijective activation and let the representation $Z$ be given by ${Z=g(A)}$. Then:
\begin{equation}
    I(X;Z) = I(X;A)
\end{equation}

\end{theorem}

\begin{theorem}
Take all assumptions and definitions from the previous theorems corresponding to a fully connected network, but assume that we are using a $K$ layer neural network instead of a single layer network, with the noise placed on the $K^{th}$ layer. Let $N_0, N_1, \cdots, N_K$ denote the number of hidden units in each layer. Redefine $\tilde{N}$ to ${\tilde{N} \triangleq \min(N_0, N_1, \cdots, N_K)}$. Then the results of those previous theorems hold. 
\end{theorem}

\end{document}


\title{On the Maximum Mutual Information Capacity of Neural Architectures - Appendix}
\maketitle

\section{Proofs of Lemmas}

We repeat the theorem statements (without their proofs) alongside these lemmas to minimize the amount of back and fourth jumps the reader must perform to read them, as we refer to the theorems several times in the lemmas.

\begin{theorem}\label{thm:lingaussmmi}
Let $\Sigma_x$ be a positive definite matrix and let $\sigma^2 > 0$. Let $N_0$ and $N_1$ be natural numbers representing the input and hidden dimensions. Let $\mc{N}(\mu; A)$ denote the Gaussian distribution with mean $\mu$ and covariance matrix $A$. Let $ X \sim \mc{N}(0; \Sigma_x)$, $X \in \mathbb{R}^{N_0}$, $Z|X,W,b \sim \mc{N}(WX+b; ~\sigma^2 Id_{N_1})$, $Z \in \mathbb{R}^{N_1}$, where $W \in \mathbb{R}^{N_1 \times N_0}$, $Id_{N_1}$ is the identity matrix in $N_1$ dimensions, and $b \in \mathbb{R}^{N_1}$ is the bias vector. Let ${\tilde{N}=\min(N_0, N_1)}$. Let $\Sigma_{x,\tilde{N}}$ denote $\tilde{N} \times \tilde{N}$ diagonal matrix containing the $\tilde{N}$ largest eigenvalues of $\Sigma_x$. Let $\lambda^x_{\tilde{N}}$ denote the smallest eigenvalue of $\Sigma_{x,\Tilde{N}}$, and let $\rho_{\tilde{N}} \triangleq \sigma^2 \left(\frac{\tilde{N}}{\lambda^x_{\tilde{N}}} - Tr(\Sigma_{x,\tilde{N}}^{-1})\right)$. Let $F \geq \rho_{\tilde{N}}$, $I_{W,b}(X;Z) \triangleq \mathbb{E}_{\sim p(x,z|W, b)}\left[log \frac{p(z|x,W, b)}{p(z|W, b)} \right]$, and define $\text{MMI}(X;Z) \triangleq \underset{Tr(W^TW) \leq F}{\text{ sup }} I_{W,b}(X;Z)$. Then: 
\begin{equation}
    \text{MMI}(X;Z) = \frac{\tilde{N}}{2}log \left(\frac{F + \sigma^2Tr(\Sigma_{x,\tilde{N}}^{-1})}{\sigma^2 \tilde{N}} \right) + \frac{1}{2}log~|\Sigma_{x,\tilde{N}}|
\end{equation}
\end{theorem}

\begin{lemma}\label{lemma:ordered_rho}
Take all of the assumptions from Theorem \ref{thm:lingaussmmi} except for the assumption that ${F \geq \rho}$. Let the $k^{th}$ largest eigenvalue of $\Sigma_{x}$ be denoted by $\lambda^x_k$. Let $K$ be a natural number, ${K < \tilde{N}}$, and let $\Sigma_{x,\tilde{N}-K}$ denote the ${(\tilde{N}-K) \times (\tilde{N}-K)}$ diagonal matrix containing the $\tilde{N}-K$ largest eigenvalues of $\Sigma_x$. Now, let $\rho_{\tilde{N}-K} \triangleq \sigma^2 \left(\frac{\tilde{N}-K}{\lambda^x_{\tilde{N}-K}} - Tr\left(\Sigma_{x,\tilde{N}-K}^{-1}\right)\right)$. Then:
\begin{equation}
    0 = \rho_{1} \leq \cdots \leq \rho_{\tilde{N}-K} \leq \rho_{\tilde{N}-K+1} \leq \cdots \leq \rho_{\tilde{N} - 1} \leq \rho_{\tilde{N}}
\end{equation}
\end{lemma}
\begin{proof}
First, $\rho_1$ is zero since: 
\begin{equation}
    \rho_1 = \sigma^2\left(\frac{1}{\lambda^x_{1}} - \frac{1}{\lambda^x_{1}} \right)
\end{equation}

Next, we note that the difference ${\rho_{\tilde{N}-K+1} - \rho_{\tilde{N}-K}}$ is given by:
\begin{align}
     \quad \sigma^2\left(\frac{\tilde{N} - K + 1}{\lambda^x_{\tilde{N}-K+1}} - \frac{\tilde{N} - K}{\lambda^x_{\tilde{N}-K}} - \frac{1}{\lambda_{\tilde{N}-K+1}^x} \right)
    &=  \sigma^2\left(\frac{\tilde{N} - K}{\lambda^x_{\tilde{N}-K+1}} - \frac{\tilde{N} - K}{\lambda^x_{\tilde{N}-K}} \right) \notag \\
    &\geq \sigma^2\left(\frac{\tilde{N} - K}{\lambda^x_{\tilde{N}-K}} - \frac{\tilde{N} - K}{\lambda^x_{\tilde{N}-K}} \right) =  0
\end{align}
completing the proof.
\end{proof}

\begin{theorem}\label{thm:lingaussmmi_small}
Take all of the assumptions from Theorem \ref{thm:lingaussmmi} except for the assumption that $F \geq \rho_{\tilde{N}}$ and take all definitions from lemma \ref{lemma:ordered_rho}. Let $\rho_{\tilde{N}-K+1} \geq F \geq \rho_{\tilde{N}-K}$. Then $\text{MMI}(X;Z)$ is given by:
\begin{align}\label{eqn:lingaussmmi_small}
    \frac{\tilde{N}-K}{2}log \left(\frac{F + \sigma^2Tr(\Sigma_{x,\tilde{N}-K}^{-1})}{\sigma^2 (\tilde{N}-K)} \right) + \frac{1}{2}log~|\Sigma_{x,\tilde{N}-K}|
\end{align}
\end{theorem}

\begin{theorem}\label{thm:linconvmmi}
Let $\Sigma_x$ be a positive definite matrix and let $\sigma^2 > 0$.  Let $N_0$, $N_B$, and $N_f$ be natural numbers such that $N_B$ divides $N_0$, Let $X \sim \mc{N}(0; \Sigma_x)$, $X \in \mathbb{R}^{N_0}$, $Z|X,W,b \sim \mc{N}(X \circledast W+b; ~\sigma^2 Id_{N_1})$, $Z \in \mathbb{R}^{N_0/N_B}$
Where ${W \in \mathbb{R}^{N_f \times N_B}}$ and $X \circledast W = \begin{bmatrix} \tilde{X}_{1}^TW^T  \tilde{X}_{2}^TW^T  \cdots & \tilde{X}_{N_0/N_B}^TW^T \end{bmatrix}^T$ with $\tilde{X}_j$ denoting the slice of $X$ on indices ${(j-1)N_B +1}$ through $ jN_B$. Thus $ X \circledast W$ is a convolution applied to a vectorized input with non-overlapping stride and $N_f$ filters. Suppose $\Sigma_x$ is block diagonal with identical blocks (translation invariant statistics) given by the $N_B\times N_B$ matrix $\Sigma_{\tilde{x}}$. Let ${M_{LFC}(F; \Sigma_{x}, N_0, N_1)}$ denote the maximum mutual information of the linear fully connected network given by Theorems \ref{thm:lingaussmmi} and \ref{thm:lingaussmmi_small}. Let $F>0$ be fixed and let $\text{MMI}(X;Z) \triangleq \underset{Tr(W^TW) \leq F}{\text{ sup }} I_{W,b}(X;Z)$. Then $\text{MMI}(X;Z) = \frac{N_0}{N_B}M_{LFC}(F; \Sigma_{\tilde{x}}, N_B, N_f)$.
\end{theorem} 

\begin{lemma}\label{lemma:Z_relu_entropy}
Take all of the assumptions of either Theorem \ref{thm:lingaussmmi}, Theorem \ref{thm:lingaussmmi_small}, or Theorem \ref{thm:linconvmmi}. Let $W, b$ be fixed and define ${Z_{relu}}$ through a new model given by $Z_{relu}|X,W,b \sim \mc{N}(\text{relu}(WX+b); ~\sigma^2 Id_{N_1})$ for the fully connected case, or $Z_{relu}|X,W,b \sim \mc{N}(\text{relu}(X \circledast W+b); ~\sigma^2 Id_{N_1})$ for the convolutional case. Let $\Sigma_Z$ denote the covariance matrix of $Z$ as defined in Theorem \ref{thm:lingaussmmi} and let $\Sigma_{Z_{relu}}$ denote  the covariance matrix of $Z_{relu}$. 
Then:
\begin{equation}
    H(Z_{relu}) \leq H(Z)
\end{equation}
\end{lemma}
\begin{proof}
Let $\eta$ denote a multivariate Gaussian with covariance $\sigma^2Id_{N_1}$. We further denote ${S \triangleq WX+b}$. Then:
\begin{align}
    Z = S + \eta,  ~~ Z_{relu} = relu(S) + \eta \notag \\
\end{align}
Then we have:
\begin{align}
    H(Z, Z_{relu}) &= H(Z) + H(Z_{relu} | Z) = H(Z_{relu}) + H(Z | Z_{relu})
\end{align}
It follows that:
\begin{align}
    H(Z) = H(Z_{relu}) + H(Z | Z_{relu}) - H(Z_{relu} | Z)
\end{align}
But $H(Z | Z_{relu}) \geq H(Z_{relu} | Z)$ since knowledge of $Z_{relu}$ tells us far less about $Z$ than knowledge of $Z$ tells us about $Z_{relu}$.

\end{proof}

The next lemma relies heavily on the concept of a probabilistic coupling. Thus we will first review this concept.
\begin{defn}
Given two probability models $\mathbb{P}_{\tilde{S}}$ and $\mathbb{Q}_S$ on a list of variables $S$, a \textbf{coupling} of these models is a pair of random variables $(\tilde{S}, \hat{S})$ with joint distribution $\gamma_{\tilde{S}, \hat{S}}$ such that the marginal distributions satisfy $\gamma_{\tilde{S}} = \mathbb{P}_{\tilde{S}}$ and $\gamma_{\hat{S}} = \mathbb{Q}_S$.    
\end{defn}

In particular, we will rely on the concept of a \textit{maximal coupling}, which is defined as follows:
\begin{defn}
Given two probability measures $\mathbb{P}$ and $\hat{\mathbb{P}}$ on a Euclidean space $\mc{X}$ with probability density functions $p$ and $\hat{p}$, the maximal coupling on this pair is defined as follows:

First, define the function ${m: \mc{X}\to [0,1]}$ through $m(a) := \int \text{min}\{p_{Y|X}(b|a), \hat{p}_{Y|X}(b|a) \}db$. Next, define a real number $\rho$ as $\rho := \int m d\mathbb{P}$ and define $J$ as a Bernoulli random variable with success probability $\rho$. Then define variables $U, V$ and $W$ with the following distributions: 
\begin{align}
p_{U} := \frac{m}{\rho}, ~
p_{V} := \frac{p - m}{1- \rho}, ~
p_{W} := \frac{\hat{p} - m}{1- \rho} 
\end{align}
Next define $(\tilde{X}, \hat{X})$ as functions of the above random variables as follows:
\begin{equation}
\begin{cases}
\tilde{X} = \hat{X} = U  & \text{if } J = 1 \\
\tilde{X} = V, ~~~~\hat{X} = W, & \text{if } J = 0
\end{cases}
\end{equation}

The pair ($\tilde{X}$, $\hat{X}$), and its distribution $\gamma$ form the maximal coupling. One can show that $1-\rho$ is the total variation distance between $p$ and $\hat{p}$.
\end{defn}

\begin{lemma}\label{lemma:relu_info_diff}
Take all of the assumptions of either Theorem \ref{thm:lingaussmmi} or Theorem \ref{thm:lingaussmmi_small} and all definitions from lemma \ref{lemma:Z_relu_entropy}. Denote the marginal probability laws of $Z$ and ${Z_{relu}}$ as $\mathbb{P}$ and $\hat{\mathbb{P}}$ with densities denoted $p(z)$ and $\hat{p}(z)$. Let $\mc{B}(Z)$ be the set of Borel measurable sets on $\mc{Z}$, and let $\delta$ denote the total variation distance between $\mathbb{P}$ and $\hat{\mathbb{P}}$. That is, $\delta = \sup_{A \in \mc{B}(Z)} |\mathbb{P}(A) - \hat{\mathbb{P}}(A)| = \frac{1}{2}\int |p(z) - \hat{p}(z)| dz$. Finally, let $h_2(\cdot)$ denote the binary entropy function. Let $\epsilon > 0$. Then for all ${W,b}$ such that $\delta < \frac{1}{e}$, there exists a non-negative function $g(\delta(W,b))$ which is continuous in $\delta$ from the right at $\delta=0$, has $g(0) = 0$, and :
\begin{equation}
    |I_{W,b}(X;Z) - I_{W,b}(X;Z_{relu})| \leq g(\delta(W,b))
\end{equation}
\end{lemma}
\begin{proof}
Let ($\tilde{Z}$, $\hat{Z}$, $\gamma$) denote the maximal coupling between $\mathbb{P}$ and $\hat{\mathbb{P}}$ on $(\tilde{Z}, \hat{Z})$. Then, as ${H(\tilde{Z}|X) = H(\hat{Z}|X)}$ (all of the uncertainty under these conditionals comes from the noise variable $\eta$), we have:
\begin{align}
    |I(X;Z) - I(X;\hat{Z})| &= |H(\tilde{Z}) - H(\hat{Z})|
\end{align}
We can decompose these terms as:
\begin{align}
    H(\tilde{Z}) = H(\tilde{Z}|J) + H(J) - H(J|\tilde{Z}), ~
    H(\hat{Z}) = H(\hat{Z}|J) + H(J) - H(J|\hat{Z})
\end{align}
The $H(J)$ terms cancel in the subtraction. Furthermore, both $H(J|\tilde{Z})$ and $H(J|\hat{Z})$ are bounded by $H(J) = h_2(\delta)$. Thus, by an application of the triangle inequality, we have:
\begin{equation}
    |I(X;Z) - I(X;\hat{Z})| \leq |H(\tilde{Z}|J) - H(\hat{Z}|J)| + 2h_2(\delta)
\end{equation}
We can further decompose the remaining terms as:
\begin{align}
    H(\tilde{Z}|J) = (1-\delta)H(U) + \delta H(V), ~
    H(\hat{Z}|J) = (1-\delta)H(U) + \delta H(W)
\end{align}
leaving us with: 
\begin{equation}\label{eqn:info_eqn}
    |I(X;Z) - I(X;\hat{Z})| \leq \delta|H(V) - H(W)| + 2h_2(\delta)
\end{equation}

Now, observe that the absolute entropy difference, $|H(V) - H(W)|$, expands to:

\begin{align}
    \left| \frac{1}{\delta}\int \left\{(p(z) - m(z))log\left(\frac{p(z)-m(z)}{\delta}\right) - (\hat{p}(z) - m(z))log\left(\frac{\hat{p}(z)-m(z)}{\delta}\right) \right\}dz \right| \notag
\end{align}

Let $A$ denote the set of points in $Range(Z)$ such that $p(z) \geq \hat{p}(z)$. Then the second term in the expression becomes zero over $A$ and the first term becomes zero over $A^c$. We are then left with:
\begin{equation}
    \frac{1}{\delta} \left| \int_A (p(z) - \hat{p}(z))log\left(\frac{p(z)-\hat{p}(z)}{\delta}\right)dz - \int_{A^c}(\hat{p}(z) - p(z))log\left(\frac{\hat{p}(z)-p(z)}{\delta}\right) dz \right| \notag
\end{equation}
which can equivalently be written as:
\begin{align}
    \frac{1}{\delta}\left| \int (p(z) - \hat{p}(z))log~|p(z)-\hat{p}(z)| dz \right | \leq \frac{1}{\delta} \int |p(z) - \hat{p}(z)||log~|p(z)-\hat{p}(z)|| dz
\end{align}
Now, split the domain of this last integral into three disjoint subregions $Q_1$, $Q_2$, and $Q_3$ where $Q_1$ is defined by the condition that $|p(z)-\hat{p}(z)| \leq 2 \delta$, $Q_2$ is given by the condition that $2\delta < |p(z)-\hat{p}(z)| \leq 1$, and $Q_3$ is given by the condition that $|p(z)-\hat{p}(z)| > 1$. We then have by the definition of total variation that $\frac{1}{2}\int |p(z) - \hat{p}(z)| = \delta$, so it must be the case that:
\begin{equation}
    \int_{Q_2} |p(z) - \hat{p}(z)| \leq 2\delta -  \int_{Q_1 \cup Q_3} |p(z) - \hat{p}(z)| \leq 2\delta
\end{equation}
Futhermore, in $Q_2$, $|log|p(z) - \hat{p}(z)||$ is decreasing in $|p(z) - \hat{p}(z)|$, so we can bound it above by $|log(2\delta)|$. We then have that:
\begin{align}
    \int_{Q_2} |p(z) - \hat{p}(z)||log~|p(z)-\hat{p}(z)|| dz  \leq |log~2\delta| \int_{Q_2} |p(z) - \hat{p}(z)| dz \leq 2\delta|log(2\delta)|
\end{align}
Also, in $Q_1$, if $\delta < \frac{1}{e}$, as was assumed in this Lemma's hypothesis, $|p(z) - \hat{p}(z)||log~|p(z)-\hat{p}(z)||$ is decreasing in $|p(z) - \hat{p}(z)|$. It can thus be bounded above by $2\delta|log(2\delta)|.$ Finally, the addition of the noise term $\eta$ in both models guarantees a uniform upper bound via Young's Convolution Inequality given by $|p(z) - \hat{p}(z)| < \frac{1}{(2\pi\sigma^2)^{N_1/2}}$ (independent of both $z$ and $\delta$). A derivation of this bound is provided at the end of this proof. Given that we are in $Q_3$, we then have an upper bound given by $M = max\{1, \frac{1}{(2\pi\sigma^2)^{N_1/2}}\}$.  Thus the $Q_3$ integral can be bounded as:
\begin{equation}
    \int_{Q_3}|p(z) - \hat{p}(z)||log~|p(z)-\hat{p}(z)|| dz \leq |log(M)|\int_{Q_3}|p(z) - \hat{p}(z)| \leq 2\delta|log(M)|
\end{equation}
Thus the full entropy difference expression can be bounded above by $\frac{4\delta |log(2\delta)| + 2\delta|log(M)|}{\delta}$. Plugging this back into equation (\ref{eqn:info_eqn}) gives us:
\begin{equation}
    |I(X;Z) - I(X;\hat{Z})| \leq 4\delta|log(2\delta)| + 2\delta|log(M)| + 2h_2(\delta)
\end{equation}
The term on the right hand side is the desired continuous function $g(\delta)$.

\vspace{1em}

The bound on $|p(z) - \hat{p}(z)|$ is derived from Young's Convolution Inequality. Letting $p_{r}$ be the density function of the random variable $Relu(WX+b)$ and $p_{\eta}$ that of $\eta$. Then $\hat{p}$ is the convolution of $p_r$ and $p_{\eta}$. We then have that $\|\hat{p}\|_{\infty} \leq \int p_r(z)dz \cdot \|p_{\eta}\|_{\infty}$ ($p=1, q=r=\infty$ in Young's inequality). The integral evaluates to one, and we are left with a bound of the supremum of $p_{\eta}$, which is given by its multivariate normal pdf evaluated at its mean. Since $p$ is bounded in exactly the same way, we have the desired result.

\end{proof}

\begin{theorem}\label{thm:relummi}
Take all of the assumptions of either Theorem \ref{thm:lingaussmmi} or Theorem \ref{thm:lingaussmmi_small}, and take all definitions from lemmas \ref{lemma:Z_relu_entropy} and \ref{lemma:relu_info_diff}. Then the results of Theorem \ref{thm:lingaussmmi} and Theorem \ref{thm:lingaussmmi_small} hold for $\text{MMI}(X; Z_{relu})$. That is:
\begin{equation}
    \text{MMI}(X; Z_{relu}) = \text{MMI}(X; Z)
\end{equation}
\end{theorem}

\begin{theorem}\label{lem:bijective_activation}
Let $A$ denote the pre-activated representation variable of a single layer neural network. Let $g$ be a bijective activation and let the representation $Z$ be given by ${Z=g(A)}$. Then:
\begin{equation}
    I(X;Z) = I(X;A)
\end{equation}

\end{theorem}

\begin{theorem}
Take all assumptions and definitions from the previous theorems corresponding to a fully connected network, but assume that we are using a $K$ layer neural network instead of a single layer network, with the noise placed on the $K^{th}$ layer. Let $N_0, N_1, \cdots, N_K$ denote the number of hidden units in each layer. Redefine $\tilde{N}$ to ${\tilde{N} \triangleq \min(N_0, N_1, \cdots, N_K)}$. Then the results of those previous theorems hold. 
\end{theorem}